\documentclass[letterpaper, 10 pt, conference, pdftex]{ieeeconf}  
\IEEEoverridecommandlockouts                              
\usepackage[pdftex]{graphicx}
\usepackage{epsfig} 
\usepackage{mathptmx} 
\usepackage{times} 
\usepackage{amsmath} 
\usepackage{amssymb}  
\usepackage{xcolor}

\newtheorem{theorem}{Theorem}
\newtheorem{lem}{Lemma}
\newtheorem{remark}{Remark}
\newtheorem{assumption}{Assumption}

\title{\LARGE \bf
Task-Space Singularity Avoidance for Control Affine Systems Using Control Barrier Functions
}

\author{Kimia Forghani\textsuperscript{1}, Suraj Raval\textsuperscript{1}, Lamar Mair\textsuperscript{2}, Axel Krieger\textsuperscript{3}, and Yancy Diaz-Mercado\textsuperscript{1}
\thanks{\textsuperscript{1} Authors are with the Department of Mechanical Engineering,
        University of Maryland, College Park, MD 20742
        {\tt\small \{kimiaf, sraval, yancy\}@umd.edu}}%
\thanks{\textsuperscript{2} Author is with the Division of Magnetic Manipulation and Particle Research, Weinberg
Medical Physics, Inc., North Bethesda, MD 20852
        {\tt\small lamar.mair@gmail.com}}%
\thanks{\textsuperscript{3} Author is with the Department of Mechanical Engineering, Johns Hopkins University,
Baltimore, MD 21218
        {\tt\small axel@jhu.edu}}%
}

\begin{document}

\maketitle
\thispagestyle{empty}
\pagestyle{empty}

\begin{abstract}
Singularities in robotic and dynamical systems arise when the mapping from control inputs to task-space motion loses rank, leading to an inability to determine inputs. This limits the system’s ability to generate forces and torques in desired directions and prevents accurate trajectory tracking. This paper presents a control barrier function (CBF) framework for avoiding such singularities in control-affine systems. 
Singular configurations are identified through the eigenvalues of a state-dependent input–output mapping matrix, and barrier functions are constructed to maintain a safety margin from rank-deficient regions. Conditions for theoretical guarantees on safety are provided as a function of the actuator dynamics. Simulations on a planar 2-link manipulator and a magnetically actuated needle demonstrate smooth trajectory tracking while avoiding singular configurations and reducing control input spikes by up to $100\times$ compared to the nominal controller.

\end{abstract}

\section{INTRODUCTION}
In many dynamical systems, particularly safety-critical applications, it is essential to have the ability to generate necessary forces and torques to achieve desired trajectories while maintaining stability and safety guarantees at all times. However, systems with state-dependent task spaces may encounter configurations in which this capability is compromised. In robotics, such configurations are known as kinematic singularities, where the mapping from control inputs to task-space outputs and the kinematics Jacobian loses rank (e.g., Fig.~\ref{fig:TwoLinkManipulators}). At a singular configuration, the actuation or control matrix becomes ill-conditioned, restricting the system’s ability to simultaneously generate forces and torques in desired directions. This state-dependent rank deficiency complicates motion control, as the system can no longer track arbitrary trajectories within its operating space. Moreover, near singularities, controllers that rely on matrix inversion often produce large and abrupt control inputs due to poor numerical conditioning. These spikes in actuation not only reduce control effectiveness but may also induce instability, limit performance, and pose safety concerns. 
\begin{figure}[t]
    \includegraphics[width=0.49\linewidth]{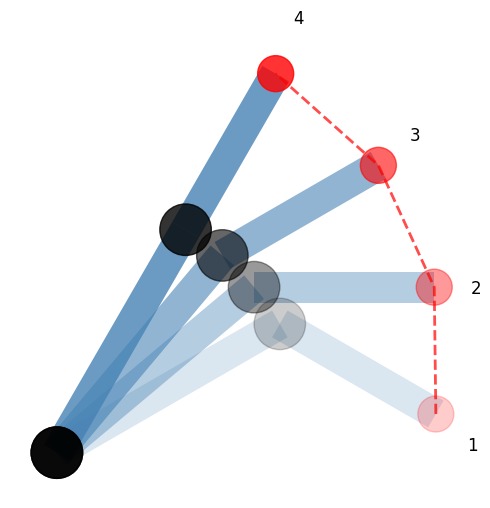}\hfill
    \includegraphics[width=0.49\linewidth]{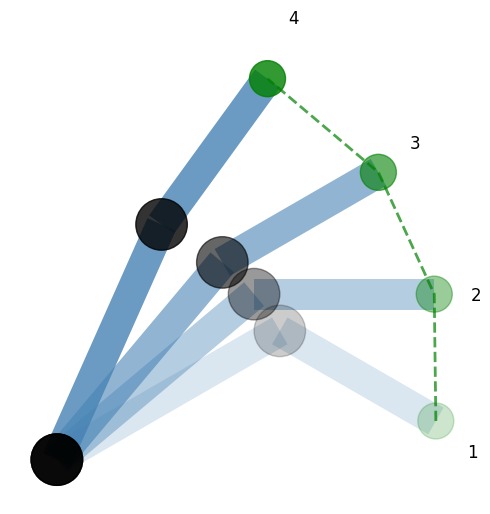}
    \caption{Two similar end-effector trajectories (dashed lines) for a two-link manipulator. The red trajectory in the fourth pose passes through a singular configuration when the middle joint angle becomes zero (flat links), while the green trajectory avoids singularity by slightly modifying the joint angles.}\label{fig:TwoLinkManipulators}
\end{figure}

Singularities in robotic and dynamical systems are well-recognized challenges in control, particularly due to their adverse effects on manipulability and real-time control performance \cite{siciliano2008springer}. Prior approaches include hybrid switching control \cite{tomlin1998switching}, hardware redesign \cite{wright2017spherical, chen2024mitigating}, Lie-bracket motion synthesis \cite{raval2024singularity, muller2018higher}, and regularization-based optimization methods \cite{johansen2004constrained, nguyen2020regularization}. While effective in specific settings, these methods either lack formal safety guarantees or are difficult to generalize. Recent barrier-function approaches \cite{kurtz2021control, wu2024singularity} begin to address this limitation for
particular system classes.

Control barrier functions (CBFs) provide a mathematical approach in control theory for maintaining constraints and securing system stability and safety \cite{ames2019control}. Expressed as a quadratic program (QP), they allow real-time computation of safe control actions. Their safety guarantees make them useful in various safety-critical applications, including collision avoidance \cite{11107732}, multi-agent systems \cite{zhang2025adaptive}, and medical \cite{forghani2025suture}. Here, we use CBFs to prevent the system from entering singular configurations accurately and efficiently. We establish a theoretical framework for identifying singularity points in a control affine system whose task-space is influenced by the control signal, applicable to underactuated, overactuated, and fully actuated setups. We present an analytical CBF construction and a numerical representation. Leveraging the framework, we adjust control inputs in real-time to avoid singular regions while minimizing deviation from the reference trajectory. Simulations validate the method.


The rest of the paper is organized as: Section~\ref{sec:Background} provides background, Section~\ref{sec:CBF} presents singularity identification, and CBF construction, Section~\ref{sec:Cases} presents case study and simulation results, and Section~\ref{sec:Conclusions} concludes.

\section{BACKGROUND AND PRELIMINARIES}\label{sec:Background}
\subsection{Dynamics and Task Output}
Consider an affine control system:
\begin{equation} \label{affine}
\begin{aligned}
    \dot{x} &= f(x) + g(x)u, \\
    z &= \gamma(x) 
\end{aligned}
\end{equation}
with \(x \in \mathbb{R}^n\) denoting the states, \(u \in \mathbb{R}^m\) being the control input, where \(f:\mathbb{R}^n\to\mathbb{R}^n\) and \(g:\mathbb{R}^m\to\mathbb{R}^{n\times m}\) are locally Lipschitz. The mapping between the task output \(z \in \mathbb{R}^d\) and the space is given by a continuously differentiable map \(\gamma:\mathbb{R}^n\to\mathbb{R}^{d}\).
The system is subject to input constraints \(u(t) \in\mathcal{U}\subseteq\mathbb{R}^m\), the space of admissible control signals. 

The task-space output changes according to
\begin{equation}
    \dot{z}=L_f\gamma(x)+L_g\gamma(x)u
\end{equation}
We define a state-dependent input-output mapping matrix according to the Lie derivative:
\begin{equation}
    \phi(x)=L_g\gamma(x)=\frac{\partial \gamma}{\partial x}g(x)=J(x)g(x) \in \mathbb{R}^{d\times m},
    \label{phi}
\end{equation}
where \(J(x) \in \mathbb{R}^{d\times n}\) is the Jacobian of the task-space output function.
A singular configuration occurs when the state-dependant input-output mapping matrix \(\phi(x)\) loses rank, i.e., when the number of singular values that approach zero is greater than \(\max(d,m)-\min(d,m)\). When this happens, the mapping between control inputs and states locally collapses the input space along certain directions, resulting in a loss of local invertibility. Such singularities are associated with critical points where the system may exhibit changes in behavior
because the commanded state cannot be achieved.

\subsection{Control Barrier Functions}

For the control-affine system in \eqref{affine}, define the safe set
\begin{equation}
    C = \{x \in \mathbb{R}^n : h(x) \geq 0\},
\end{equation}
where \(h(x)\) is continuously differentiable. Safety is ensured if \(C\) is forward invariant \cite{ames2019control}.

A function \(h(x)\) is a Control Barrier Function (CBF) if there exists an extended class-\(\mathcal{K}\) function \(\alpha\) such that
\begin{equation}
    L_f h(x) + L_g h(x) u \ge -\alpha(h(x)).
\end{equation}

Since $\dot{h}(x,u) = L_f h(x) + L_g h(x) u$ is affine in the control input, safety constraints can be incorporated into a quadratic program (QP). Given a nominal control input $v(x)$, the CBF-QP minimally modifies it to enforce safety:
\begin{equation}
\begin{aligned}
    u(x) &= \arg\min_{u \in \mathbb{R}^m} \tfrac{1}{2} \|u - v(x)\|^2 \\
    &\text{s.t. } \; L_f h(x) + L_g h(x) u \ge -\alpha(h(x)).
    \label{cbf1}
\end{aligned}
\end{equation}

\section{CONTROL BARRIER FUNCTION CONSTRUCTION}\label{sec:CBF}
\subsection{Quadratic Problem}

Equation \eqref{cbf1} enforces safety while minimizing deviation from the desired control input. Here, we ensure minimal deviation from the desired task-space kinematics \(\dot z_d\). As \eqref{affine}:
\begin{equation}
 \dot z_{d}=L_f\gamma(x) +\phi(x)v(x), 
\end{equation}
where \(v(x) \in \mathbb{R}^m\) is the reference or nominal control input. Since the deviation of the error between the desired and true task-space kinematics is given by
\begin{align*}
    \|\dot{z}-\dot{z}_d\|^2 = (u-v(x))^T\phi^T(x)\phi(x)(u-v(x))
\end{align*}
We modify the QP formulation as:
\begin{equation}
\begin{aligned}
    u &= \arg\min_{u \in \mathbb{R}^m}  (u-v(x))^TQ(x)(u-v(x))\\
    &\text{s.t. } \; L_f h(x) + L_g h(x) u \geq -\alpha(h(x)).
    \label{cbf}
\end{aligned}    
\end{equation}
To ensure \(\phi(x)\) is kept at full rank, we define
\begin{align}
    Q(x) = 
    \begin{cases}
        \phi^T(x)W\phi(x), & d\geq m\\
        \phi^T(x)W\phi(x) + \Gamma, & d<m
    \end{cases}
\end{align}
for some positive definite matrices \(W\in\mathbb{R}^{d\times d}\), \(\Gamma\in\mathbb{R}^{m\times m}\). So long as $\phi(x)$ is full rank, this ensures that \(Q(x)\) is positive definite, and the QP cost is convex.


\begin{remark}
For over-actuated systems (\(d<m\)), \(\phi^T\phi\) is rank deficient, so a regularization term \(\Gamma\succ0\) is added to ensure convexity \cite{tikhonov1977solutions}. See section \ref{sec:Cases}.B.
\end{remark}


\begin{assumption}
The reference policy \(v(x)\) stabilizes the dynamics in \eqref{affine}, and the desired output \(z_d\) is reachable under admissible inputs, particularly in underactuated cases.
\end{assumption}

\subsection{Singularity Points}
Singular configurations occur when the input–output mapping matrix \(\phi(x)\) in \eqref{phi} becomes rank deficient. This framework applies to underactuated, fully actuated, and over-actuated systems, independent of the chosen reference controller. Since \(\phi(x)\) may be non-square, we define matrix \(M(x)=\phi(x)\phi(x)^T\) for over-actuated systems and \(M(x)=\phi(x)^T\phi(x)\) otherwise. The eigenvalues of \(M(x)\) are real, non-negative, and equal to the squared singular values of \(\phi(x)\). This  allows a unified representation since a rectangular $\phi \in \mathbb{R}^{n \times m}$ has only $\min(n,m)$ singular values, and $M(x)$ preserves these values through its eigen-structure.

Without loss of generality, we take the case where the system is over-actuated.
In this case, the \(M(x)=\phi(x)\phi(x)^T\) matrix is always a  \(\mathbb{R}^{d\times d}\) square matrix, regardless of the number of actuators. To find its eigenvalues, we solve the characteristic equation:
\begin{equation}
det(\lambda I-\phi(x)\phi(x)^T)=0
\label{chareq}
\end{equation}

Expanding the determinant in \eqref{chareq} leads to:
\begin{equation}
p(\lambda) = \lambda^d + c_{d-1}\lambda^{d-1} + \cdots + c_1 \lambda + c_0,
\end{equation}
where the coefficients $c_i$ depend on the entries of $\phi(x)\phi(x)^\top$.

Analytically solving for $p(\lambda)=0$ gives an explicit expression for each eigenvalue as a function of the states. This characterizes the level set in $\mathbb{R}^d$ where $\lambda$ becomes zero. Subsequently, computing the derivative with respect to all states yields $d$ conditions for constructing the CBF. 

\subsection{Barrier Function}

We define the safe set associated with each CBF as:
\begin{equation}
    \mathcal{C}_i = \left\{ x \in \mathbb{R}^d \mid \lambda_i(x) - \epsilon \geq 0 \right\},
\end{equation}
which corresponds to the subset of the state space where the \( i \)-th eigenvalue of \( \phi(x)\phi(x)^\top \) remains greater than zero and \( \epsilon > 0 \) is a small positive margin to enforce a buffer away from singularities. 
A candidate CBF for each eigenvalue is:
\begin{equation}
    h_i(x) = \lambda_i(x) - \epsilon,
\end{equation}

\begin{lem}[\cite{magnus1985matrix}]
Let \( \phi \in \mathbb{R}^{d \times m} \) be a differentiable matrix-valued function. Then the matrix \( M(x) = \phi(x)\phi(x)^\top \) is symmetric, positive semidefinite, and differentiable.
\end{lem}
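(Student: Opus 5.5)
The three properties can be checked one at a time, directly from the definition $M(x)=\phi(x)\phi(x)^\top$. No earlier result of the paper is needed beyond the standing assumption that $\phi(x)=J(x)g(x)$ in \eqref{phi} is differentiable in $x$.

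\emph{Symmetry.} Transposition reverses the order of a product and is an involution. Hence
\begin{equation*}
M(x)^\top=\left(\phi(x)\phi(x)^\top\right)^\top=\left(\phi(x)^\top\right)^\top\phi(x)^\top=\phi(x)\phi(x)^\top=M(x)
\end{equation*}
for every $x$.

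\emph{Positive semidefiniteness.} Fix $x$ and take an arbitrary $y\in\mathbb{R}^d$. Then
\begin{equation*}
y^\top M(x)y=\left(\phi(x)^\top y\right)^\top\left(\phi(x)^\top y\right)=\|\phi(x)^\top y\|^2\geq 0 .
\end{equation*}
Combined with symmetry, this is exactly positive semidefiniteness. It also shows that the eigenvalues of $M(x)$ are real and nonnegative, as used in the construction of $h_i$.

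Equality $y^\top M(x)y=0$ holds precisely when $y\in\ker\phi(x)^\top$. This matches the rank-loss characterization of singularities given earlier.

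\emph{Differentiability.} Each entry of $M$ is $M_{jk}(x)=\sum_{l=1}^m \phi_{jl}(x)\phi_{kl}(x)$. This is a finite sum of products of differentiable scalar functions, so it is differentiable by the product and sum rules. Equivalently, by the matrix product rule,
\begin{equation*}
\frac{\partial M}{\partial x_i}=\frac{\partial \phi}{\partial x_i}\phi^\top+\phi\left(\frac{\partial \phi}{\partial x_i}\right)^\top ,
\end{equation*}
which is itself symmetric. If $\phi$ is $C^1$, the same formula shows $M$ is $C^1$, which is what the later Lie-derivative computations $L_fh_i$ and $L_gh_i$ require.

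The only point needing care is what ``differentiable'' demands of $\phi$ in this setting. The paper assumes only that $g$ is locally Lipschitz and $\gamma$ is $C^1$, so differentiability of $\phi$ is an extra hypothesis, and the lemma takes it as given. If continuous differentiability is needed downstream, I would state that $\phi\in C^1$ implies $M\in C^1$ via the formula above.

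I would also flag that differentiability of $M$ does not by itself imply differentiability of the individual eigenvalues $\lambda_i(x)$ used in $h_i$. That property holds where the eigenvalues are simple, but it is a separate issue beyond this lemma.
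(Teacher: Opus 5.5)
Your proof is correct. The paper gives no proof of its own for this lemma (it only cites Magnus and Neudecker), and your verification via the transpose identity, the quadratic form $y^\top M(x)y=\|\phi(x)^\top y\|^2\geq 0$, and the product rule $\partial_{x_i}M=(\partial_{x_i}\phi)\phi^\top+\phi(\partial_{x_i}\phi)^\top$ is the standard argument behind that citation. Your caveat also holds: the standing assumptions of Section~\ref{sec:Background} ($g$ locally Lipschitz, $\gamma$ only $C^1$) do not make $\phi=Jg$ differentiable, so differentiability must be imposed as a hypothesis, which the proof of Theorem~\ref{th1} does implicitly when it asserts that $J$ and $g$ are continuously differentiable.
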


\begin{lem}[\cite{lancaster1985theory}]
All simple eigenvalues \( \lambda_i(x) \) of a symmetric, differentiable matrix \(M(x)\) are differentiable functions of \( x \).
\end{lem}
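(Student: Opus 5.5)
The plan is to write the characteristic polynomial of $M(x)$ in a form that depends smoothly on $x$, and then apply the implicit function theorem at a simple root. First, by the previous lemma, $M(x)=\phi(x)\phi(x)^\top$ is symmetric and differentiable in $x$. Consider
\begin{equation*}
F(\lambda,x)=\det\big(\lambda I - M(x)\big)=\lambda^d + c_{d-1}(x)\lambda^{d-1}+\cdots+c_0(x).
\end{equation*}
Each coefficient $c_k(x)$ is a polynomial in the entries of $M(x)$; these are the sums of principal minors, up to sign. Hence each $c_k(x)$ is differentiable in $x$. It follows that $F$ is differentiable jointly in $(\lambda,x)$ and is polynomial in $\lambda$.

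Next, fix $x_0$ and let $\lambda_i(x_0)$ be a simple eigenvalue of $M(x_0)$. Simplicity means $\lambda_i(x_0)$ is a simple root of $F(\cdot,x_0)$, so $\partial_\lambda F(\lambda_i(x_0),x_0)\neq 0$. The implicit function theorem then gives a neighborhood $U$ of $x_0$ and a unique differentiable function $\tilde\lambda(x)$ on $U$ with $\tilde\lambda(x_0)=\lambda_i(x_0)$ and $F(\tilde\lambda(x),x)=0$. For this step I would use the $C^1$ version, which requires $\phi$ to be $C^1$; this is consistent with $\gamma$ being $C^1$ and $g$ smooth enough.

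The main obstacle is identifying $\tilde\lambda(x)$ with the \emph{labelled} eigenvalue $\lambda_i(x)$, for instance the $i$-th largest, rather than some other root. I would handle this by continuity of the spectrum. The eigenvalues of a symmetric matrix depend continuously on its entries; Weyl's inequality gives $|\lambda_k(M(x))-\lambda_k(M(x_0))|\le\|M(x)-M(x_0)\|$. Since $\lambda_i(x_0)$ is separated from the other eigenvalues by a gap $\delta>0$, we can shrink $U$ so that $\|M(x)-M(x_0)\|<\delta/2$. On this smaller $U$, $\lambda_i(x)$ remains simple and is the only root of $F(\cdot,x)$ in the interval $(\lambda_i(x_0)-\delta/2,\lambda_i(x_0)+\delta/2)$. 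The same holds for $\tilde\lambda(x)$ after shrinking further by continuity. Therefore $\tilde\lambda=\lambda_i$ on $U$, and $\lambda_i$ is differentiable at $x_0$.

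Finally, I would record the explicit derivative, since the CBF construction needs it. Let $v_i(x)$ be a unit eigenvector. Differentiating $M v_i=\lambda_i v_i$, multiplying on the left by $v_i^\top$, and using symmetry of $M$ gives
\begin{equation*}
\frac{\partial \lambda_i}{\partial x_k}=v_i^\top \frac{\partial M}{\partial x_k} v_i .
\end{equation*}
This formula is well defined precisely because simplicity makes $v_i$ unique up to sign.
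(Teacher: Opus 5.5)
The paper gives no argument for this lemma; it only cites Lancaster--Tismenetsky. Your proposal is a correct, self-contained proof for $C^1$ families, which is the regularity you state, and it follows the standard route. You apply the implicit function theorem at a simple root of $F(\lambda,x)=\det(\lambda I-M(x))$. You then use Weyl's inequality to show that the local root $\tilde\lambda$ is the \emph{ordered} eigenvalue $\lambda_i$. A bare citation hides this identification step, and it is the one that matters here. The paper works with ordered eigenvalues ($\lambda_1<\lambda_2$ in the manipulator example). In the proof of Theorem~\ref{th1} it also applies the lemma to all eigenvalues without checking simplicity. Your spectral gap $\delta$ shows exactly where differentiability can fail, namely at eigenvalue crossings. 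The citation buys brevity; your version makes explicit which hypotheses are actually used (simplicity and regularity of $M$).

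There are two small corrections.

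\textbf{Regularity.} The lemma assumes only that $M$ is differentiable, whereas the implicit function theorem you invoke needs $M\in C^1$. Your parenthetical justification is also off. With $\gamma\in C^1$, $J$ is merely continuous, so $M=\phi\phi^\top$ is $C^1$ only if, for example, $\gamma\in C^2$ and $g\in C^1$; this is what the proof of Theorem~\ref{th1} asserts. You can drop the implicit function theorem altogether. Once Weyl gives continuity of $\lambda_i$ at $x_0$, split
\[
0=\bigl[F(\lambda_i(x),x)-F(\lambda_i(x_0),x)\bigr]+\bigl[F(\lambda_i(x_0),x)-F(\lambda_i(x_0),x_0)\bigr].
\]
The first bracket equals $q(x)\,(\lambda_i(x)-\lambda_i(x_0))$, where $q(x)$ is the divided difference of the polynomial $F(\cdot,x)$ between $\lambda_i(x_0)$ and $\lambda_i(x)$. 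Hence $q(x)\to\partial_\lambda F(\lambda_i(x_0),x_0)\neq 0$. The second bracket is $D_xF(\lambda_i(x_0),x_0)(x-x_0)+o(|x-x_0|)$. Solving for $\lambda_i(x)-\lambda_i(x_0)$ gives differentiability at $x_0$ using only differentiability of $M$ at $x_0$.

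\textbf{Derivative formula.} You obtain $\partial\lambda_i/\partial x_k=v_i^\top(\partial M/\partial x_k)v_i$ by differentiating $Mv_i=\lambda_iv_i$. That step presupposes a differentiable eigenvector field $v_i(x)$, which you have not shown to exist. The formula is not needed for the lemma, but it follows from your own setup with no regularity of eigenvectors. Use $\partial\lambda_i/\partial x_k=-\partial_{x_k}F/\partial_\lambda F$, Jacobi's formula for both derivatives, and the identity $\operatorname{adj}(\lambda_iI-M)=\prod_{j\neq i}(\lambda_i-\lambda_j)\,v_iv_i^\top$, valid at a simple eigenvalue with $\|v_i\|=1$.
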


\begin{theorem}
\label{th1}
For the system in \eqref{affine}, let \( M(x) = \phi(x)\phi(x)^\top \in \mathbb{R}^{d \times d} \) and suppose \( M(x) \) has eigenvalues \( \lambda_1(x), \dots, \lambda_d(x) \). Then, for any \( \epsilon > 0 \) and \( x(0) \in \mathcal{C} \), the functions
\[
h_i(x) = \lambda_i(x) - \epsilon, \quad i = 1, \dots, d,
\]
are valid \textit{barrier functions} and the control in \eqref{cbf} renders the sets
\[
\mathcal{C}_i = \{ x \in \mathbb{R}^d \mid h_i(x) \geq 0 \} = \{ x \in \mathbb{R}^d \mid \lambda_i(x) \geq \epsilon \}.
\]
forward invariant so long as either \(\nabla \lambda_i(x)^\top g(x) \neq 0\) or \(\nabla \lambda_i(x)^\top f(x) \geq -\alpha(h_i(x))\) for every \(x\in\mathcal{C}\).
\end{theorem}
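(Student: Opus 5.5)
The plan is to reduce the claim to the standard CBF forward-invariance result of \cite{ames2019control}. I would check three things in turn: that each $h_i$ is continuously differentiable near $\mathcal{C}_i$; that the constraint in \eqref{cbf} is feasible at every $x\in\mathcal{C}$, so $h_i$ is a valid CBF; and that the resulting closed loop is well posed. The comparison lemma then gives $h_i(x(t))\ge 0$ for all $t\ge 0$.

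\textbf{Step 1 (regularity).} By the first lemma, $M(x)=\phi(x)\phi(x)^\top$ is symmetric, positive semidefinite and differentiable. Here I use that $\gamma$ is $C^1$ and that $g$ is assumed regular enough for $\phi=J g$ to be differentiable. By the second lemma, every simple eigenvalue $\lambda_i$ is differentiable. If $\lambda_i$ is simple with unit eigenvector $v_i(x)$, its gradient is
\[
\frac{\partial \lambda_i}{\partial x_k} = v_i^\top \frac{\partial M}{\partial x_k} v_i .
\]
This gives $L_f h_i=\nabla\lambda_i^\top f$ and $L_g h_i=\nabla\lambda_i^\top g$. I will note that simplicity must be assumed, at least on a neighbourhood of $\partial\mathcal{C}_i$. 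Where eigenvalues cross one can instead work with the ordered eigenvalues, which are only Lipschitz; I expect to state this as an implicit hypothesis rather than resolve it.

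\textbf{Step 2 (feasibility).} This is the core of the argument, and I split it into the two cases of the hypothesis.
\begin{itemize}
\item If $\nabla\lambda_i(x)^\top g(x)\neq 0$, the constraint is a single nondegenerate affine half-space in $u$, so it is nonempty. For instance, take $u=v+s\,(\nabla\lambda_i^\top g)^\top$ with $s\ge 0$ large enough.
\item If $\nabla\lambda_i^\top g=0$, the constraint no longer depends on $u$. It reads $\nabla\lambda_i^\top f\ge-\alpha(h_i)$, which holds by hypothesis, so every $u$ is feasible.
\end{itemize}
In both cases $\sup_u[L_fh_i+L_gh_iu]\ge-\alpha(h_i)$ on $\mathcal{C}$, which is exactly the CBF condition. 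While $\lambda_i\ge\epsilon>0$, $\phi$ stays full rank, so $Q(x)\succ0$. The QP \eqref{cbf} is therefore strictly convex, and when feasible it has a unique minimizer.

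\textbf{Step 3 (invariance).} Along closed-loop solutions, $\dot h_i\ge-\alpha(h_i)$. With $h_i(x(0))\ge0$, the comparison lemma yields $h_i(x(t))\ge0$ for all $t$ on the interval of existence. For existence and uniqueness I need the QP solution to be locally Lipschitz in $x$. This follows from the standard result for QPs with a single constraint and $Q$ continuous and positive definite, provided $L_gh_i\neq0$ where the constraint is active.

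\textbf{Main obstacle.} I expect the hardest part to be the degenerate situations: eigenvalue multiplicity, and points where $L_gh_i=0$ but the constraint is active. At such points Lipschitz continuity of the QP solution can fail. I would handle this by invoking the hypothesis as stated, and noting that uniqueness of solutions is the only thing actually needed beyond feasibility.

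A second obstacle is that enforcing all $d$ constraints simultaneously needs joint feasibility, not only per-constraint feasibility. I would treat each $\mathcal{C}_i$ separately, as the statement does, and remark that the intersection requires joint feasibility.
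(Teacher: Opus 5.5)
Your proposal is correct and follows the same route as the paper's proof. Both derive differentiability of $h_i$ from the two lemmas, then show pointwise feasibility of the affine constraint $\nabla\lambda_i^\top f+\nabla\lambda_i^\top g\,u+\alpha(h_i)\ge 0$ by splitting into the cases $\nabla\lambda_i^\top g\neq 0$ and $\nabla\lambda_i^\top g=0$ (using the drift hypothesis in the second case). Your additions (the comparison-lemma step, Lipschitz well-posedness of the QP feedback, simplicity of $\lambda_i$, joint feasibility of all $d$ constraints) make explicit what the paper leaves implicit; two small slips are that differentiability of $\phi=Jg$ needs $\gamma\in C^2$ and $g\in C^1$ (the paper simply assumes $J,g\in C^1$) rather than $\gamma\in C^1$, and that $\lambda_i\ge\epsilon$ for a single $i$ gives full rank of $\phi$ only when $\lambda_i$ is the smallest eigenvalue (for $d<m$, $Q\succ 0$ holds anyway because of $\Gamma$).
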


\begin{proof}
Since both \( J(x) \) and the input matrix \( g(x) \) are assumed continuously differentiable, the mapping 
\(\phi(x) = J(x) g(x)\) is differentiable. Consequently by Lemma 1 the matrix \( M(x) = \phi(x)\phi(x)^\top \) is symmetric, positive semidefinite, and differentiable. By Lemma 2, all eigenvalues \( \lambda_i(x) \) of \( M(x) \) are differentiable and real. Thus, each function \( h_i(x) = \lambda_i(x) - \epsilon \) is differentiable. Using the system dynamics, we compute the derivative along trajectories:
\[
\dot{h}_i(x) = \nabla \lambda_i(x)^\top \dot{x} = \nabla \lambda_i(x)^\top \left( f(x) + g(x) u \right),
\]
yielding the condition
\begin{equation}\label{cond}
\nabla \lambda_i(x)^\top f(x) + \nabla \lambda_i(x)^\top g(x) u + \alpha(\lambda_i(x) - \epsilon) \geq 0.
\end{equation}

This inequality is affine in \( u \)  and is feasible if there exists an admissible \( u \in \mathbb{R}^m \) satisfying it. In particular, feasibility is guaranteed if either
\begin{equation}\label{cond2}
    \nabla \lambda_i(x)^\top g(x) \neq 0,
\end{equation}
so that the control input can influence the inequality, or, if \( \nabla \lambda_i(x)^\top g(x) = 0 \), the inequality is still feasible provided
\[
\nabla \lambda_i(x)^\top f(x) \geq -\alpha(h_i(x)),
\]
i.e., the drift term naturally enforces safety.





If either case is valid for every \(x\in\mathcal{C}\), this guarantees that all eigenvalues remain bounded away from zero, maintaining full rank of \( \phi(x) \) and thus avoiding singular configurations.
\end{proof}

\begin{remark}
Even when condition \eqref{cond2} is satisfied the feasibility of enforcing \eqref{cond}
is not guaranteed in the presence of control constraints such as bounded inputs or actuation saturation. In such cases, the validity of the CBF should be evaluated on a case-by-case basis by verifying whether the required control input lies within the admissible set \( \mathcal{U} \subset \mathbb{R}^m \). 
\end{remark}

\begin{remark}
Constructing a valid CBF for complex systems can be tedious due to extensive algebraic manipulation. An alternative is to approximate the derivative numerically at each time step. This simplifies system-dependent derivations but is sensitive to noise and disturbances, leading to numerical instabilities. Instead, we reformulate the problem as an obstacle avoidance problem, as described in section IV.B through an example. This approach eliminates the need to solve for eigenvalue conditions and uses numerical Singular Value Decomposition (SVD). 
\end{remark}

\section{CASE STUDIES}\label{sec:Cases}
This section explains the general framework for constructing CBFs for any dynamical system to avoid singularities using analytical and numerical approaches.

\subsection{Analytical CBF}
Here we provide an example of a simple control system for which we can do an analytical derivation. 
\paragraph{System description} Consider a 2--link planar manipulator (Fig.\ref{fig:TwoLinkManipulators}). Let the joint angles and control inputs be 
\[
q = \begin{bmatrix} q_1 \\ q_2 \end{bmatrix} \in \mathbb{R}^2, 
\qquad u = \dot{q} \in \mathbb{R}^2.
\] The joint--space dynamics are written in control--affine form:
\begin{align}
\dot{q} &= f(q) + g(q)u, \\
f(q) &= 0, \qquad g(q) = I_2.
\end{align}

The end--effector position, task output, is obtained via forward kinematics:
\begin{align}
z(q) =
\begin{bmatrix}
x(q) \\ y(q)
\end{bmatrix}
=
\begin{bmatrix}
l_1 \cos q_1 + l_2 \cos(q_1 + q_2) \\
l_1 \sin q_1 + l_2 \sin(q_1 + q_2)
\end{bmatrix},
\end{align}
where $l_1,l_2$ are the link lengths. This yields the task--space velocity relation
\begin{align}
 & \quad \quad \quad \quad \dot{z}= J(q)I_{2\times 2}\,u=J(q)\,u,\\
J(q) &=
\begin{bmatrix}
- l_1 \sin q_1 - l_2 \sin(q_1+q_2) & - l_2 \sin(q_1+q_2) \\
\;\; l_1 \cos q_1 + l_2 \cos(q_1+q_2) & \;\; l_2 \cos(q_1+q_2)
\end{bmatrix}.
\end{align}

\paragraph{Control Barrier Function}
To detect singularities we analyze the eigenvalues of $JJ^\top$. Solving \eqref{chareq} with \(l_1=l_2=1\) yields two eigenvalues $\lambda_1(q_2), \lambda_2(q_2)$, with $\lambda_2 > \lambda_1$:

\begin{align}
\lambda_1 &= \cos q_2 - \tfrac{1}{2}\sqrt{12\cos q_2 + 8\cos^2 q_2 + 5} + \tfrac{3}{2}, \\
\lambda_2 &= \cos q_2 + \tfrac{1}{2}\sqrt{12\cos q_2 + 8\cos^2 q_2 + 5} + \tfrac{3}{2}.
\end{align}
We define the CBF as:
\begin{equation}
h(q) = \lambda_1(q_2) - \epsilon, \qquad \epsilon > 0.
\label{eq:cbf}
\end{equation}
To enforce the constraint $\dot{h}(q,u) \geq -\alpha( h(q))$, we compute
\begin{align}
\dot{h}(q,u) &= \nabla h(q)^\top I_2\dot{q} 
= \begin{bmatrix} 0 & \tfrac{\partial\lambda_1}{\partial q_2} \end{bmatrix} u, \\
\frac{\partial\lambda_1}{\partial q_2} 
&= -\sin q_2 + \frac{12\sin q_2 + 16\cos q_2 \sin q_2}
{4\sqrt{12\cos q_2 + 8\cos^2 q_2 + 5}}.
\label{lambdagrad}
\end{align}

\paragraph{Feasibility Guarantee}
Based on Theorem \ref{th1}, the QP is feasible if either \( L_g h \neq 0 \) or \( L_f h + \alpha(h) \ge 0 \). That is:
\begin{equation}\textit{ either \quad }
\nabla h(q)^\top I_2 
= \begin{bmatrix} 0 & \tfrac{\partial\lambda_1}{\partial q_2} \end{bmatrix} \neq 0 \textit{\quad or\quad } \alpha(h) \ge 0 
\end{equation}
Solving for \eqref{lambdagrad} equal to zero on \(0 \leq q \leq2\pi \) yields \(q=0,\frac{2\pi}{3},\pi,\frac{4\pi}{3}\). Since $L_f h=0$ here, feasibility holds whenever $\alpha(h)=h\ge 0$, including at points where $\frac{\partial\lambda_1}{\partial q_2}=0$
(e.g., $q_2\in\{2\pi/3,\,4\pi/3\}$). The configurations for \(q=0,\pi\) are on the safety boundary and therefore will not be reached.

\paragraph{Quadratic Program}
At each time step, a simple proportional controller is used
to generate the task--space controller (desired end-effector velocity), and the corresponding reference joint velocity is obtained by mapping it using the pseudoinverse of the Jacobian.
\begin{align}
\dot{z}_{\text{des}} &= K_p (z_d - z), \\
u_{\text{ref}} &= J^\dagger(q)\,\dot{z}_{\text{des}},
\end{align}
where $z_d$ is the target position. The actual input $u$ is given by the solution of the QP
\begin{equation}
\begin{aligned}
\min_{u \in \mathbb{R}^2} \quad & \| J(q)u - J(q)u_{\text{ref}} \|^2 \\
\text{s.t.} \quad & \dot{h}(q,u) \geq -\alpha (h(q)). \label{eq:qp_cbf}
\end{aligned}
\end{equation}

The cost minimizes deviation from the reference while the constraint keeps $\lambda_1(q)$ positive, thereby avoiding singular configurations.

\paragraph{Simulation}
We consider two sequential tasks. The first drives the manipulator toward a singular configuration $(x,y) = (\sqrt{2}, \sqrt{2})$, which corresponds to holding $q_1$ fixed while letting $q_2 \to 0$ (the arm straightens). The second task drives the manipulator to a non-singular target. As shown in Fig.~\ref{fig:theta2_cbf}, without the CBF the arm attempts to fully straighten, causing large spikes in the joint velocities due to Jacobian ill-conditioning. With the CBF constraint \eqref{eq:qp_cbf}, the controller modifies $u$ to keep $q_2$ away from singularity. This reduces control spikes by approximately $70\times$ and $100\times$ for joints 1 and 2, respectively, while producing smoother joint trajectories.

\begin{figure}[!t]
    \centering
    \includegraphics[width=\linewidth]{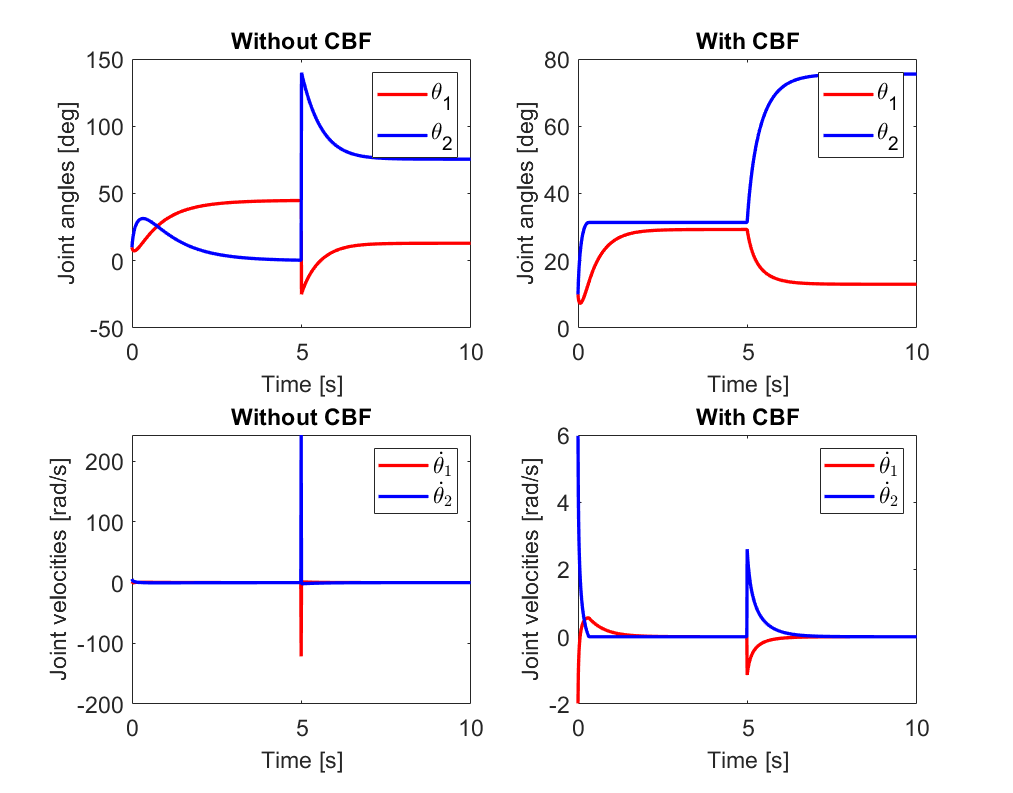}
    \caption{Comparison of joint angles and velocities: (Left) without the CBF,  $\theta_2 \to 0$  at \(t=5s\) induces spikes in joint velocities and abrupt changes in joint angles. (Right) with the CBF,  the singularity is avoided, the joint angle trajectory remains smooth, and the joint velocities remain low.}
    \label{fig:theta2_cbf}
\end{figure}

\subsection{Numerical CBF}

\begin{figure}[b]
      \centering
     
      \includegraphics[width=\linewidth]{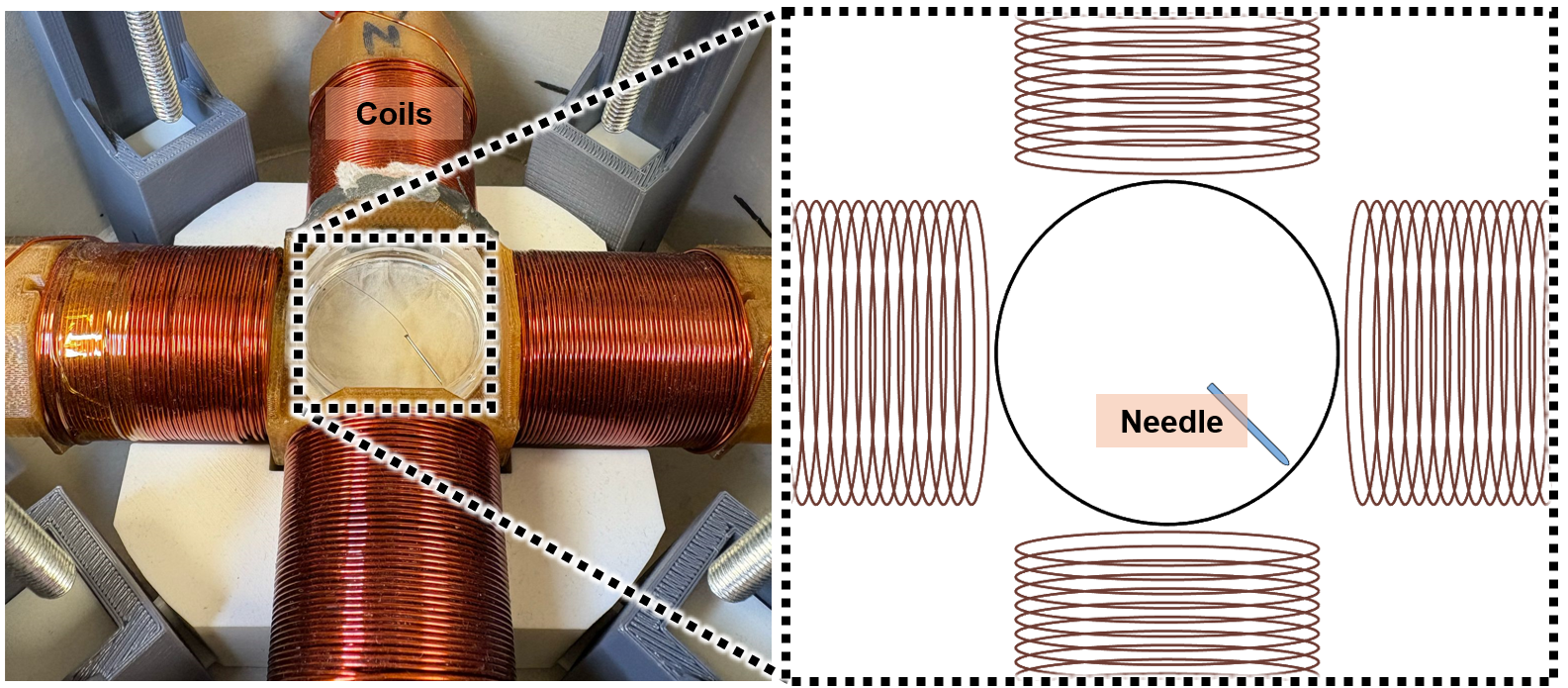}
    
      \caption{The magnetic actuation system (left), and the corresponding simulation workspace in MATLAB (right).}
      \label{fig:magneto}
   \end{figure}
The preceding example shows that CBFs can be constructed directly from analytical derivatives of the eigenvalues. However, for  complex systems, the characteristic equation often yields expressions that are algebraically intractable. We next present an implementation in which the singular set is characterized numerically, while maintaining the same CBF-QP objective.

\paragraph{System description}   
We consider a magnetic actuation system consisting of 4 stationary electromagnets arranged around a  workspace to control the position of a magnetic agent as depicted in Fig. \ref{fig:magneto}. Addressing singularities in magnetic actuation has been a key research focus due to their impact on controllability and stability \cite{yang2020magnetic}.
Under some mild assumptions \cite{fan2020towards}, the dynamics are modeled as a first order driftless system. The general dynamics is written as the control-affine system:
\begin{align}\label{eq:dyn}
\dot{z}=\dot{X}=g(X)u,
\end{align}
Here, \( \dot{X} \) represents both the state velocity vector of the magnet and its task output, and \( u \) is the control input.
The dynamics are expanded as:
\begin{equation}\label{g}
\begin{bmatrix}
c_t \dot{x} \\ c_t \dot{y} \\ c_r \dot{\theta}
\end{bmatrix}
=
\begin{bmatrix}
F_{x,1} & F_{x,2} & F_{x,3} & F_{x,4} \\
F_{y,1} & F_{y,2} & F_{y,3} & F_{y,4} \\
\tau_1 & \tau_2 & \tau_3 & \tau_4
\end{bmatrix}
\begin{bmatrix}
I_1 \\ I_2 \\ I_3 \\ I_4
\end{bmatrix},
\end{equation}
where the \(c\) are friction coefficients, \( F_{x,i} \) and \( F_{y,i} \) are the forces in the \( x \) and \( y \) directions, respectively, and \( \tau_i \) is the torque generated by the \( i \)-th electromagnet, \(i=1,\ldots,4\), when supplied with 1 A of current. The control inputs are the currents \( I_i \) applied to each electromagnet. The magnetic agent is modeled as a magnetic dipole with moment $m$. The resulting torque and force acting on the agent are given by:
\begin{equation}
\tau = m \times B,
\qquad F = (m \cdot \nabla) B,
\end{equation}
where \(B\) is the magnetic field at the agent’s location. The magnetic field $B$ and its gradient are computed using the dipole approximation of each coil \cite{griffiths2023introduction}.

We are interested in determining singular configurations by identifying when the output matrix \(g\) defined in \eqref{g} becomes ill-conditioned. Since \(g\) is a non-square matrix, we consider \( g g^T \succ 0 \), whose eigenvalues are real, non-negative, and equal to the squares of the singular values of \(g\):
\begin{equation}
    g g^T = \begin{bmatrix} 
        \sum_{i=1}^{n} F_{x,i}^2 & \sum_{i=1}^{n} F_{x,i} F_{y,i} & \sum_{i=1}^{n} F_{x,i} \tau_i \\
        \sum_{i=1}^{n} F_{x,i} F_{y,i} & \sum_{i=1}^{n} F_{y,i}^2 & \sum_{i=1}^{n} F_{y,i} \tau_i \\
        \sum_{i=1}^{n} F_{x,i} \tau_i & \sum_{i=1}^{n} F_{y,i} \tau_i & \sum_{i=1}^{n} \tau_i^2 
    \end{bmatrix}
\end{equation}

Although by solving the characteristic equation in \eqref{chareq} we have an explicit expression for the safe regions in terms of the system states \((x,y,\theta)\), it is extremely lengthy and impractical, especially since constructing a CBF also requires computing its derivatives. To address this, we represent the unsafe regions as virtual 3D obstacles and reformulate the problem as a standard obstacle avoidance task. To create these obstacles we first create point clouds of unsafe configurations using the MATLAB built-in SVD function to numerically find configurations corresponding to near-zero singular values. The corresponding point cloud is illustrated in Fig. \ref{fig:singularity-cloud} 
\begin{figure}[t]
      \centering
     
      \includegraphics[width=3.2in]{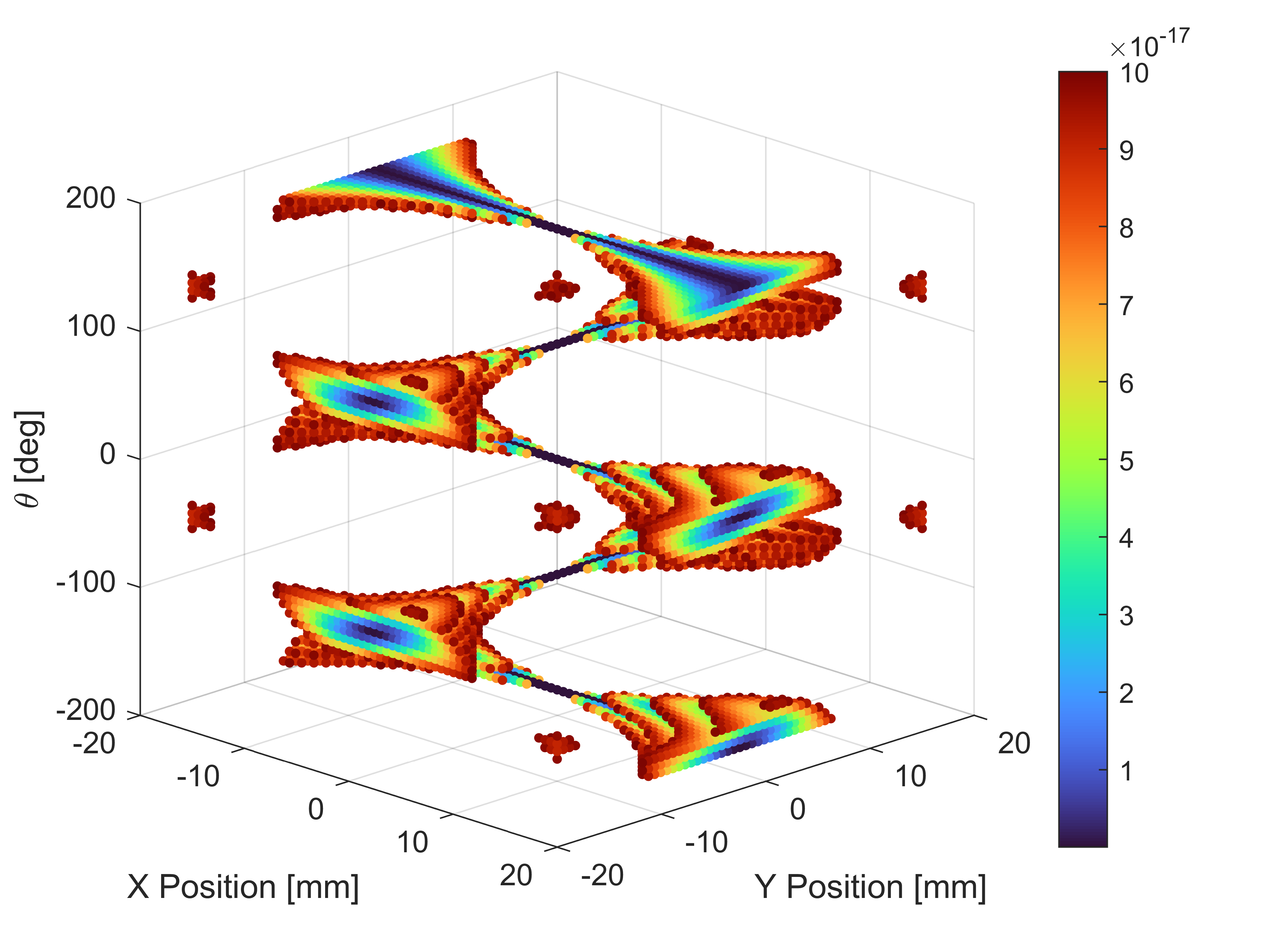}
    
      \caption{3D scatter plot of configurations where the smallest singular value falls below a threshold for the magnetic system in Fig.~\ref{fig:magneto}. Color indicates singular value magnitude. Coordinates represent 2D position and orientation $\theta$ of the magnetic agent.}

      \label{fig:singularity-cloud}
   \end{figure}
\begin{figure}[t]
    \centering
    \includegraphics[width=0.49\textwidth]{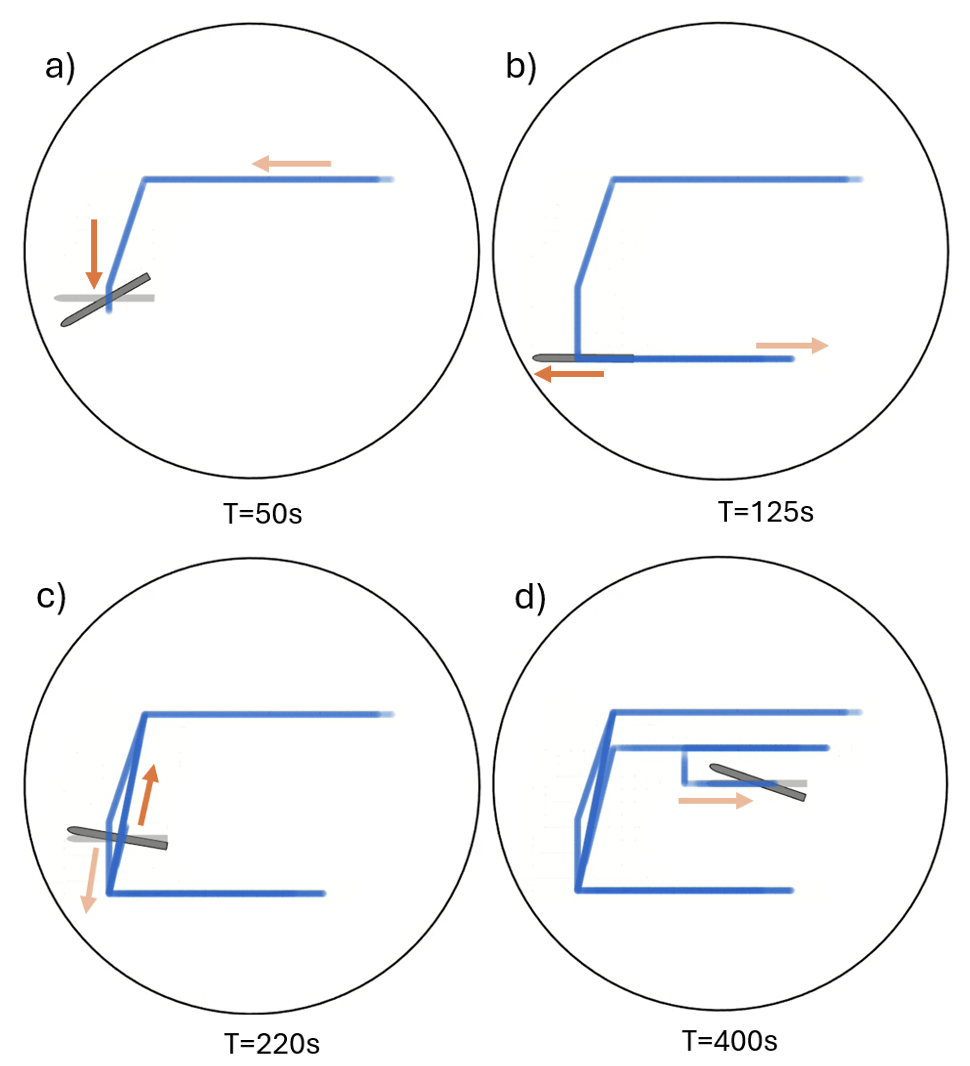}
      \caption{Snapshots of a simulated suturing task. The needle (gray) follows the planned trajectory (blue) while tilting from the reference orientation (gray shadow) to avoid singular configurations. Orange arrows indicate motion direction. The needle: a) approaches the first penetration site, b) tightens the suture while adjusting orientation, c) advances along the incision returning to the reference angle, d–f) completes the stitch.}\label{fig:casestudy}
\end{figure}

\paragraph{3D Obstacle Construction}
Distance-based constraints are often implemented by approximating obstacles as ellipsoids \cite{ferraguti2020control} or hyperspheres \cite{zeng2021safety} to ensure differentiability. However, such approximations significantly overestimate concave regions. Instead, we partition the sampled point cloud into multiple local components, using methods described below to guarantee local differentiability.

Given a 3D point cloud \(P\), we first scale it into a unit cube for numerical stability and reconstruct its surface boundary using the alpha shape method \cite{edelsbrunner1994three}. The resulting surface represents the zero level set of the barrier function and defines the safety boundary. The reconstruction partitions the point cloud into \(M\) distinct components. The alpha shape method preserves concave features through a tunable parameter \(\alpha\), producing a triangulated boundary that closely matches the sampled geometry. This triangulation ensures local continuity of the distance-based CBF near the safety boundary. Disjoint points are treated as separate obstacles to prevent undesirable switching behavior.

\paragraph{Safety-Set Construction}
The closest point to a query point \(p\) is computed by projecting onto each triangle and selecting the minimum-distance candidate; if the projection lies outside a triangle, the closest point is computed on its edges. The computation is implemented using vectorized operations to efficiently handle large surface meshes. Although the closest point may switch abruptly in concave regions, this does not affect control since the CBF is active only near the boundary. To allow greater control freedom away from singular regions, we choose a less restrictive class-\( \mathcal{K} \) function in our CBF construction. In particular, the choice \( \alpha(h) = \gamma h^2 \) yields larger numerical value than \( \alpha(h) = \gamma h \) for large values of \( h \geq 1\). Thus, it relaxes the CBF constraint and reduces unnecessary control intervention away from the boundary. 

For agent at position \(X\) and obstacles $O = 1, \dots, m$, we define an obstacle avoidance function as:
\begin{equation}
h_{\text{obs,}i, O}(X, p) = \tfrac{1}{2} (\|X - p_O\|^2 - \delta^2),\label{eq:safety_function}
\end{equation}
where \(p_O\) is the position of the closest point on obstacle \(O\), and \(\delta\) is the minimum allowable distance between the two.

\paragraph{CBF Controller}
Our objective is to design a control input that tracks a desired state velocity while guaranteeing safety via CBFs. To achieve this, we penalize the deviation of the induced state velocity \( \dot{X} = {g(x)} u \) \eqref{eq:dyn} from a reference velocity \( \dot{X}_{\text{ref}} \) through the cost
\begin{equation}
(u - u_{\text{ref}})^\top \left( {g(x)^\top} W {g(x)}\right) (u - u_{\text{ref}})
\end{equation}
where \( W \succ 0 \) is a weight matrix penalizing errors in state velocities. We weight \( x \) and \( y \) more heavily than \( \theta \).

Since \( {g(x)^\top} W {g(x)} \) has rank at most 3 due to over-actuation, the resulting cost matrix is rank-deficient. We resolve this by introducing a regularization term \( \Gamma \succ 0 \) \cite{tikhonov1977solutions} and define:
\begin{equation}
Q(x) = \left( {g(x)^\top} W {g(x)} + \Gamma \right)
\end{equation}

The control input is computed by the following QP:
\begin{equation}
\begin{aligned}
\min_{\substack{u \in \mathbb{R}^{d}}} \;\; \tfrac{1}{2} &(u - u_{\text{ref}})^\top Q(x) (u - u_{\text{ref}})  \\
\text{s.t.} \quad &A_{\text{obs}} u \geq b_{\text{obs}} 
\label{eq:qp_control}
\end{aligned}
\end{equation}
where 
\( u \in \mathbb{R}^d \) is the control input,
\( u_{\text{ref}} \in \mathbb{R}^d \) is the reference control input,
\( A_{\text{obs}}= \frac{\partial h_m}{\partial x} \in \mathbb{R}^{m \times d}  \), and
\( b_{\text{obs}} = -\alpha( h_m)=h_m(x)^2\in \mathbb{R}^{m} \) .


\paragraph{Simulation}
As a case study, we simulate a small-incision suturing task \cite{wang2025semi}. The objective is to steer a magnetically actuated needle along a stitch trajectory while avoiding singular configurations. The simulation represents the magnetic actuation environment in Fig. \ref{fig:magneto} with workspace diameter of 35 mm.

Fig.~\ref{fig:casestudy} shows representative task snapshots. As the needle approaches singular regions, the CBF-QP modifies its orientation while maintaining the desired spatial path. This adjustment prevents rank deficiency of the actuation matrix without missing penetration points. Because suturing is performed on flexible tissue, these small orientation deviations do not degrade stitch quality. Once the unsafe region is cleared, the needle smoothly returns to the reference orientation. The RMS position error is 0.99 mm, the RMS orientation error is 0.65 rad, and the maximum current applied is 4 A.

The corresponding trajectory in configuration space is shown in Fig.~\ref{fig:casestudydata}, where singular configurations are depicted in green. The colored path demonstrates that the needle remains outside the singular set while suturing.

      \begin{figure}[t!]
      \centering
     
      \includegraphics[width=\linewidth,clip=true,trim=150 0 100 120]{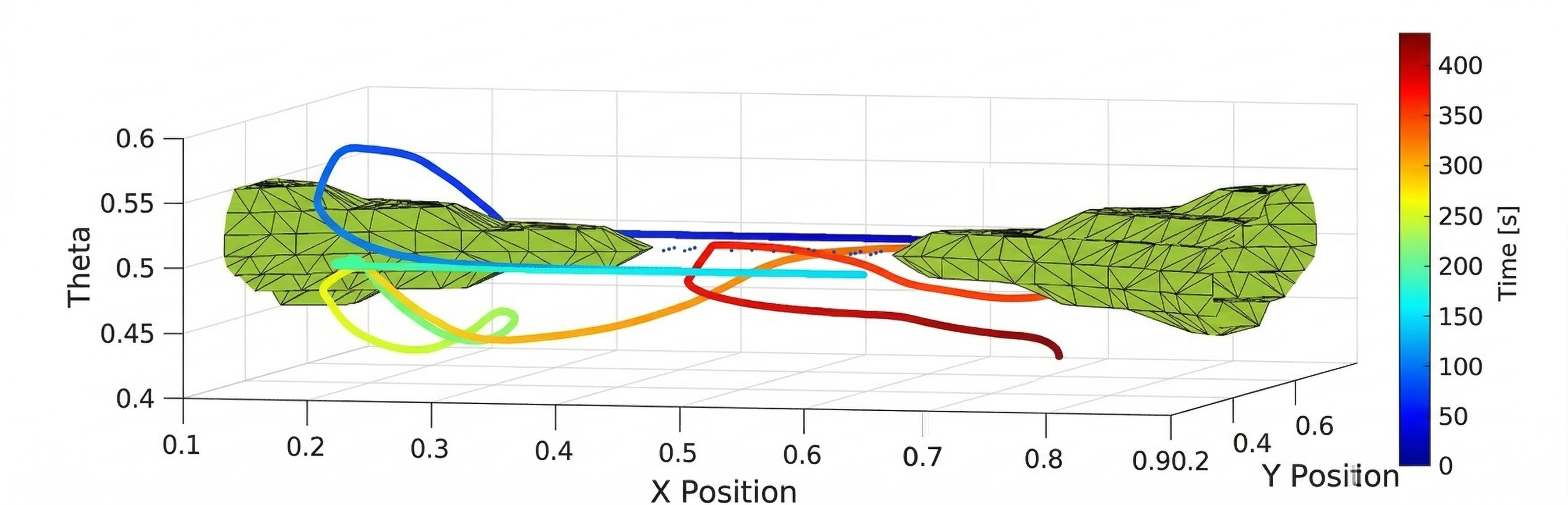}

      \caption{Safe magnetic needle trajectory (time-colored dots) for a suture pattern in configuration space that avoids singular configurations (green). }\label{fig:casestudydata}
   \end{figure}

\section{CONCLUSIONS}\label{sec:Conclusions}
In this work, we introduce a CBF-based framework to address configuration-dependent singularities in time-varying systems, a key challenge in robotics and control. Singularities are identified and represented analytically and numerically, and used to perform real-time avoidance via a CBF-QP with minimal trajectory deviation. Results demonstrate improved control smoothness and stability, reducing abrupt input fluctuations near singularities, ensuring safer operations.



\bibliographystyle{IEEEtran}
\bibliography{references}

\end{document}